\newcommand\BM\boldsymbol
\newcommand\BB\mathbb
\newcommand\CAL\mathcal
\newcommand\BAR\widebar
\newcommand\HAT\widehat
\newcommand\TLD\widetilde
\DeclareMathOperator*{\argmin}{argmin}
\definecolor{myRed}{rgb}{1,0,0}
\definecolor{myBlue}{rgb}{0,0,1}
\newsavebox\CBox
\newcommand\hcancel[2][0.5pt]{%
  \ifmmode\sbox\CBox{$#2$}\else\sbox\CBox{#2}\fi%
  \makebox[0pt][l]{\usebox\CBox}%
  \rule[0.5\ht\CBox-#1/2]{\wd\CBox}{#1}}
\newcommand\method{DIMES}
\title{\method: A Differentiable Meta Solver for Combinatorial Optimization Problems}
\author{%
  Ruizhong Qiu\thanks{Equal contribution.}\ \,\thanks{Work was done during internship at CMU.} \\
  Department of Computer Science \\
  University of Illinois Urbana--Champaign \\
  \texttt{rq5@illinois.edu} \\
  \And
  Zhiqing Sun\footnotemark[1]\,, Yiming Yang \\
  Language Technologies Institute \\
  Carnegie Mellon University \\
  \texttt{\{zhiqings,yiming\}@cs.cmu.edu} \\
}
\begin{document}

\maketitle

\begin{abstract}
  Recently, deep reinforcement learning (DRL) models have shown promising results in solving NP-hard Combinatorial Optimization (CO) problems.
  However, most DRL solvers can only scale to a few hundreds of nodes for combinatorial optimization problems on graphs, such as the Traveling Salesman Problem (TSP).
  This paper addresses the scalability challenge in large-scale combinatorial optimization by proposing a novel approach, namely, \method.
  Unlike previous DRL methods which suffer from costly autoregressive decoding or iterative refinements of discrete solutions, \method{} introduces a compact continuous space for parameterizing the underlying distribution of candidate solutions.
  Such a continuous space allows stable REINFORCE-based training and fine-tuning via massively parallel sampling.
  We further propose a meta-learning framework to 
  enable effective initialization of model parameters in the fine-tuning stage.
  Extensive experiments show that \method{} outperforms recent DRL-based methods on large benchmark datasets for Traveling Salesman Problems and Maximal Independent Set problems.
\end{abstract}

\section{Introduction}

Combinatorial Optimization (CO) is a fundamental problem in computer science. It has important real-world applications 
such as shipment planning, transportation, robots routing, biology, circuit design, and more
\citep{vesselinova2020learning}. However, due to NP-hardness, 
a significant portion of the CO problems 
suffer from 
an exponential computational cost when using traditional algorithms. As a well-known example, the Traveling Salesman Problem (TSP) has been intensively studied \citep{karp1972reducibility,papadimitriou1977euclidean} for finding
the most cost-effective tour over an input graph where each node is visited exactly once 
before finally returning to the start node. Over the past decades, 
significant effort has been made for designing more efficient heuristic solvers \citep{arora1996polynomial,gonzalez2007handbook} 
to approximate near-optimal solutions in a reduced search space.

Recent development in deep reinforcement learning (DRL) has shown promises in 
solving CO problems without manual injection of domain-specific expert knowledge \citep{bello2016neural,19iclr-am,kwon2020pomo}. The appeal of neural methods is because they can learn useful patterns (such as graph motifs) from data, which might be difficult to discover by hand.  
A typical category of DRL solvers, namely \textit{construction heuristics learners}, \citep{bello2016neural,19iclr-am} 
uses a Markov decision process (MDP) to grow partial solutions by adding one new node per step, with a trained strategy which assigns higher probabilities to better solutions.
Another category of DRL-based solvers, namely \textit{improvement heuristics learners} \citep{chen2019learning,wu2021learning}, 
iteratively refines a feasible solution with neural network-guided local Operations Research (OR) operations 
\citep{reeves1997modern}.
A major limitation of these DRL solvers lies in their scalability on large instances.  For example, current DRL solvers for TSP can only scale to graphs with up to hundreds of nodes.

The bad scalability of these DRL methods lies in the fact that they suffer from costly decoding of CO solutions, which is typically linear in the number of nodes in the input graph. 
Since the reward of reinforcement learning is determined after decoding a complete solution (with either a chain rule factorization or iterative refinements), either construction or improvement heuristic learners would encounter the sparse reward problem when dealing with large graphs \citep{19iclr-am,joshi2020learning,kim2021learning}.
While such an overhead can be partially alleviated by constructing several parts of the solution in parallel \citep{ahn2020learning} for locally decomposable CO problems\footnote{Locally decomposable problem refers to the problem where the feasibility constraint and the objective can be decomposed by locally connected variables (in a graph) \citep{ahn2020learning}.}, such as for maximum independent set (MIS) problems \citep{moon1965cliques}, 
how to scale up neural solvers for CO problems in general, including the locally non-decomposable ones (such as TSP)
is still an open challenge. 

In this paper, we address the scalability challenge by proposing a novel framework, namely \method{} (DIfferentiable MEta Solver), for solving combinatorial optimization problems.
Unlike previous DRL-based CO solvers that rely on
construction or improvement heuristics, we introduce a compact continuous space to parameterize the underlying distribution of candidate solutions, 
which allows massively parallel on-policy sampling without the costly decoding process, and effectively reduces the variance of the gradients by the REINFORCE algorithm \citep{williams1992simple} during both training and fine-tuning phases.
We further propose a meta-learning framework for CO over problem instances to enable effective initialization of model parameters in the fine-tuning stage.
To our knowledge, we are the first to apply meta-learning over a collection of CO problem instances, where each instance graph is treated as one of a collection tasks 
in a unified framework.

We need to point out that the idea of designing a continuous space for combinatorial optimization problems has been tried by the heatmaps approaches in the literature \citep{li2018combinatorial,joshi2019efficient,fu2020generalize,drori2020learning,kool2021deep}. 
However, there are major distinctions between the existing methods and our \method. 
For instance, \citet{fu2020generalize} learn to generate heatmaps via supervised learning (i.e., each training instance is paired with its best solution) \citep{applegate2006concorde,gurobi2018gurobi}, which is very costly to obtain on large graphs.
\method{} is directly optimized with gradients estimated by the REINFORCE algorithm without any supervision, so it can be trained on large graphs directly.
As a result, \method{} can scale to large graphs with up to tens of thousands of nodes, and predict (nearly) optimal solutions without the need for costly generation of supervised training data or human specification of problem-specific heuristics. 

In our experiments, we show that \method{} outperforms strong baselines among DRL-based solvers on TSP benchmark datasets, and can successfully scale up to graphs with tens of thousands of nodes. As a sanity check, we also evaluate our framework with locally decomposable combinatorial optimization problems, including Maximal Independent Set (MIS) problem for synthetic graphs and graphs reduced from satisfiability (SAT) problems. Our experimental results show that \method{} achieve competitive performance compared to neural solvers specially designed for locally decomposable CO problems.

\section{Related Work}

\subsection{DRL-Based Construction Heuristics Learners}

Construction heuristics methods create a solution of CO problem instance in one shot without further modifications.
\citet{bello2016neural} are the first to tackle combinatorial optimization problems
using neural networks and reinforcement learning. They used a Pointer Network (PtrNet) \citep{pointer-network} as the policy network and used the actor-critic algorithm \citep{konda2000actor} for training on TSP and KnapSack instances.
Further improved models have been developed afterwards
\citep{ean, 19iclr-am, peng2019deep, drori2020learning, kwon2021matrix}, such as attention models \citep{vaswani2017attention}, better DRL algorithms \citep{s2v-dqn,ma2019combinatorial,kool2019buy,kwon2020pomo,ouyang2021improving,xin2021multi,wang2021game},
for an extended scope of CO problems 
such as Capacitated Vehicle Routing Problem (CVRP) \citep{nazari2018reinforcement}, Job Shop Scheduling Problem (JSSP) \citep{zhang2020learning}, Maximal Independent Set (MIS) problem \citep{s2v-dqn,ahn2020learning}, and boolean satisfiability problem (SAT) \citep{yolcu2019learning}.

Our proposed method in this paper belongs to the category of \emph{construction heuristics learners} 
in the sense of producing a one-shot solution per problem instance.
However, there are major distinctions between previous methods and ours. One distinction is how to construct solutions. Unlike previous methods which generate the solutions via a constructive Markov decision process (MDP) with rather costly decoding steps (adding one un-visited node per step to a partial solution), we introduce a compact continuous space to parameterize the underlying distribution of discrete candidate solutions, and to allow efficient sampling from that distribution without costly neural network-involved decoding. Another distinction is about the training framework. For instance, \citet{drori2020learning} proposes a similar solution decoding scheme but employs a DRL framework to train the model. Instead, we propose a much more effective meta-learning framework to train our model, enabling \method{} to be trained on large graphs directly.


\subsection{DRL-Based Improvement Heuristics Learners }

In contrast to construction heuristics, DRL-based improvement heuristics methods train a neural network to 
iteratively 
improve the quality of the current solution 
until 
computational budget runs out. Such DRL-based improvement heuristics methods are usually inspired by classical local search algorithms such as 2-opt \citep{croes1958method} and the large neighborhood search (LNS) \citep{shaw1997new}, and have been demonstrated with outstanding results by many previous work \citep{wu2021learning,l2i,wang2021bi,da2020learning,chen2019learning,hottung2019neural,xin2021neurolkh,ma2021learning,hudson2021graph,kim2021learning}.
Improvement heuristics methods generally show better performance than construction heuristics methods but are slower in computation in return.

\subsection{Supervised Learners for CO Problems}

\citet{pointer-network} trained a Pointer Network to predict a TSP solution based on supervision signals from the Held--Karp algorithm \citep{bellman1962dynamic} or approximate algorithms. \citet{li2018combinatorial} and \citet{joshi2019efficient} trained a graph convolutional network to predict the possibility of each node or edge to be included in the optimal solutions of MIS and TSP problems, respectively.
Recently, \citet{joshi2019learning} showed that unsupervised reinforcement learning 
leads to better emergent generalization over various sized graphs 
than supervised learning. Our work in this paper provides further evidence for the benefits of the unsupervised training, or more specifically, unsupervised generation of heatmaps \citep{li2018combinatorial,joshi2019efficient,fu2020generalize,kool2021deep}, for combinatorial optimization problems.

\section{Proposed Method}\label{sec:method}


\subsection{Formal Definitions} \label{sec:31}


Following a conventional notation \citep{papadimitriou1998combinatorial} we define $\mathcal{F}_s$ as the set of discrete feasible solutions for a CO problem instance $s$, and $c_s: \mathcal{F}_s \rightarrow \mathbb{R}$ as the cost function for feasible solutions $f \in \mathcal{F}_s$. The objective is to find the optimal solution for a given instance $s$:
\begin{equation}
    f^*_s=\argmin_{f\in\CAL F_s} c_s(f).
\end{equation}
For the Traveling Salesman Problem (TSP), 
$\CAL F_s$ is the set of all the tours 
that visit each node exactly once and returns to the starting node at the end, 
and $c_s$ calculates the cost for each tour $f \in \CAL F_s$ by summing up the edge weights in the tour. The size of $\CAL F_s$ for TSP is $n!$ for a graph with $n$ nodes.
For the Maximal Independent Set (MIS) problem, $\CAL F_s$ is a subset of the power set $\CAL S_s = \{0, 1\}^n$ and consists of
all the independent subsets where each node of a subset has no connection to any other node in the same subset,
and $c_s$ calculates the negation of the size of each independent subset.

We parameterize the solution space with a continuous and differentiable vector $\boldsymbol{\theta} \in \mathbb{R}^{|\CAL V_s|}$, where $\CAL V_s$ denotes the variables in the problem instance $s$ (e.g., edges in TSP and nodes in MIS), and estimates the probability of each feasible solution $f$ as:
\begin{equation} \label{eq:policy}
    p_{\boldsymbol{\theta}}(f \mid s) \propto  \exp\bigg(\sum_{i=1}^{|\CAL V_s|} f_i \cdot \theta_i\bigg) \text{\quad subject to\quad} f \in \CAL F_s.
\end{equation}
where 
$p_{\boldsymbol{\theta}}$ is  
an energy function over the discrete feasible solution space, $f$ is a $|\CAL V_s|$-dimensional vector with element $f_i \in \{0, 1\}$ indicating whether the $i^\text{th}$ variable is included in feasible solution $f$, and the higher value of $\theta_i$ means a higher probability for the $i^\text{th}$ variable produced by $p_{\boldsymbol{\theta}}(f\mid s) $.




\subsection{Gradient-Based Optimization} \label{sec:32}

When the combinatorial problem is locally decomposable, such a MIS, a penalty loss \citep{karalias2020erdos,alkhouri2022differentiable} can be added to suppress the unfeasible solutions, e.g.:
\begin{equation}
    \ell_{\text{Erd\H os}}(\boldsymbol{\theta} \mid s) = \sum_{f \in \CAL S_s}\left[p_{\boldsymbol{\theta}}(f\mid s) \cdot (c_s(f) + \beta \cdot \mathbbm{1}(f \not\in \CAL F_s))\right].
\end{equation}
where $\beta > \max_{f \in \CAL F_s} c_s(f)$.
The objective function $\ell_{\text{Erd\H os}}$ can thus be calculated analytically and enable end-to-end training. However, this is not always possible for general structured combinatorial problems such as TSP\footnote{TSP has a global constraint of forming a Hamiltonian cycle.}. Therefore, we propose to directly optimize the expected cost over the underlying population of feasible solutions, which is defined as:
\begin{equation}
    \ell_p(\boldsymbol{\theta}\mid s) = \mathbb{E}_{f \sim p_{\boldsymbol\theta}}\left[c_s(f)\right].
\end{equation}
Optimizing this objective requires efficient sampling, with which 
REINFORCE-based \citep{williams1992simple} gradient estimation can be calculate. Nevertheless, a common practice to sample from
the energy $p_{\boldsymbol\theta}$ functions requires MCMC \citep{lecun2006tutorial}, which is not efficient enough. Hence we propose to design an auxiliary distribution $q_{\boldsymbol{\theta}}$ over the feasible solutions $\CAL F_s$, such that the following conditions hold:
1) sampling from $q_{\boldsymbol{\theta}}$ is efficient, and
2) $q_{\boldsymbol{\theta}}$ and $p_{\boldsymbol{\theta}}$ should convergence to the same optimal $\boldsymbol{\theta}^*$.
Then, we can replace $p_{\boldsymbol\theta}$ by $q_{\boldsymbol{\theta}}$ in our objective function as:
\begin{equation} \label{eq:loss}
    \ell_q(\boldsymbol{\theta}\mid s) = \mathbb{E}_{f \sim q_{\boldsymbol\theta}}\left[c_s(f)\right],
\end{equation}
and get the REINFORCE-based update rule as:
\begin{equation} \label{eq:reinforce}
\nabla_{\boldsymbol{\theta}}\mathbb{E}_{f \sim q_{\boldsymbol\theta}}\left[c_s(f)\right] = \mathbb{E}_{f \sim q_{\boldsymbol{\theta}}}[(c_s(f) - b(s)) \nabla_{\BM\theta}\log q_{\BM\theta}(f)],
\end{equation}
where $b(s)$ denotes a baseline function that does not depend on $f$ and estimates the expected cost to reduce the variance of the gradients. In this paper, we use a sampling-based baseline 
function proposed by \citet{kool2019buy}.

Next, we specify the auxiliary distributions for TSP and MIS, respectively.  For brevity, we omit the conditional notations of $s$ for all probability formulas in the rest of the paper.

\subsubsection{Auxiliary Distribution for TSP}
For TSP on an $n$-node graph, each feasible solution $f$ consists of $n$ edges forming a tour, which can be specified as a permutation $\pi_f$ of $n$ nodes, where $\pi_f(0)=\pi_f(n)$ is the start/end node, and $\pi_f(i) \neq \pi_f(j)$ for any $i,j$ with $0\le i,j<n$ and $i \neq j$. Note that for a single solution $f$, $n$ different choices of the start node $\pi_f(0)$ correspond to $n$ different permutations $\pi_f$. In this paper, we choose the start node $\pi_f(0)$ randomly with a uniform distribution:
\begin{gather}
q_\text{TSP}(\pi_f(0)=j):=\frac1n\quad\text{for any node }j;\\
q^{\text{TSP}}_{\boldsymbol{\theta}}(f):=\sum_{j=0}^{n-1} \frac{1}{n}\cdot q_{\text{TSP}}(\pi_f\mid \pi_f(0)=j).
\end{gather}
Given the start node $\pi_f(0)$, we factorize the probability via chain rule in the visiting order:
\begin{equation}
    q_{\text{TSP}}(\pi_f\mid\pi_f(0)):= \prod_{i=1}^{n-1} q_{\text{TSP}}(\pi_f(i)\mid\pi_f(<i)).
\end{equation}
Since the variables in TSP are edges, we let $\theta_{i,j}$ denote the $\theta$ value of edge from node $i$ to node $j$ for notational simplicity, i.e., we use a matrix $\boldsymbol{\theta} \in \mathbb{R}^{n\times n}$ to parameterize the probabilistic distribution of $n!$ discrete feasible solutions. We define:
\begin{equation}
    q_{\text{TSP}}(\pi_f(i)\mid\pi_f(<i)) :=
    \frac{\exp(\theta_{\pi_f(i-1),\pi_f(i)})}{\sum_{j=i}^n \exp(\theta_{\pi_f(i-1),\pi_f(j)})}.
\end{equation}
Here a higher valued $\theta_{i,j}$ corresponds to a higher probability for the edge from node $i$ to node $j$ to be sampled. 
The compact, continuous and differentiable space of $\boldsymbol\theta$ allows us to leverage gradient-based optimization without costly MDP-based construction of feasible solutions, which has been a bottleneck for scaling up in representative DRL solvers so far.
In other words, we also no longer need costly MCMC-based sampling for optimizing our model due to the chain-rule decomposition.
Instead, we use autoregressive factorization for sampling from the auxiliary distribution, which is faster than sampling with MCMC from the distribution defined by the energy function.


\subsubsection{Auxiliary Distribution for MIS}
For the Maximal Independent Set (MIS) problem, the feasible solution is a set of independent nodes, which means that none of the node has any link to any other node in the same set.
To ease the analysis, we further impose a constraint to the MIS solutions such that each set is not a proper subset of any other independent set in the feasible domain.

To enable the chain-rule decomposition in probability estimation, we introduce $\boldsymbol{a}$ as an ordering of the independent nodes in solution $f$, and $\{\boldsymbol{a}\}_f$ as the set of all possible orderings of the nodes in $f$.
The chain rule applied to $\boldsymbol{a}$ can thus be defined as:
\begin{align}\label{eq:prob-mis}
    q_{\boldsymbol\theta}^{\text{MIS}}(f) &= \sum_{\boldsymbol{a} \in \{\boldsymbol{a}\}_f} q_{\text{MIS}}(\boldsymbol{a}),\\
    q_{\text{MIS}}(\boldsymbol{a})
    &=\prod_{i=1}^{|\boldsymbol{a}|} q_{\text{MIS}}(a_i\ |\ \boldsymbol{a}_{<i})\nonumber
    = \prod_{i=1}^{|\boldsymbol{a}|} \frac{\exp(\theta_{a_i})}{\sum_{j \in \mathcal{G}(\boldsymbol{a}_{<i})} \exp(\theta_j)}.
\end{align}
where $\mathcal{G}(\boldsymbol{a}_{<i})$ denotes the set of available nodes
for growing partial solution $(a_1, \dots, a_{i-1})$, i.e., the nodes that have no edge to any nodes in $\{a_1, \dots, a_{i-1}\}$.
Notice again that the parameterization space for MIS $\boldsymbol{\theta} \in \mathbb{R}^{n}$ (where $n$ denotes the number of nodes in the graph) is compact, continuous and differentiable, which allows efficient gradient-driven optimization.

Due to the space limit, we leave the proof of the convergence between $p_{\boldsymbol{\theta}}$ and  $q_{\boldsymbol{\theta}}$ (i.e., $q^{\text{TSP}}_{\boldsymbol{\theta}}$ and $q^{\text{MIS}}_{\boldsymbol{\theta}}$) to the appendix.


\subsection{Meta-Learning Framework} \label{sec:33}

Model-Agnostic Meta-Learning (MAML) \citep{finn2017model} is originally proposed for few-shot learning. In the MAML framework, a model is first trained on a collection of \emph{tasks} simultaneously, and then adapts its model parameters to each task. The standard MAML uses second-order derivatives in training, which are costly to compute. To reduce computation burden, the authors also propose first-order approximation that does not require second-order derivatives.

Inspired by MAML, we train a graph neural network (GNN) over a collection of \emph{problem instances} in a way that the it can capture the common nature across all the instances, and adapt its distribution parameters effectively to each instance based on the features/structure of each input graph.
Let $F_{\boldsymbol{\varPhi}}$ be the graph neural network with parameter $\boldsymbol{\varPhi}$, and denote by  $\boldsymbol{\kappa}_s$ the input features of an instance graph $s$ in collection $\mathcal{C}$, by $\boldsymbol A_s$ the adjacency matrix of the input graph, and by 
$\boldsymbol{\theta}_s:=F_{\boldsymbol{\varPhi}}(\boldsymbol{\kappa}_s,\boldsymbol A_s)$ the instance-specific initialization of distribution parameters.
The vanilla loss function is defined as
the expected cost of the solution for any graph in the collection as:
\begin{equation}\label{eq:standard_rl}
\begin{split}
 \mathcal{L}(\boldsymbol{\varPhi} \mid \mathcal{C})
&= \mathbb{E}_{s \in \mathcal{C}}
\ell_q(\boldsymbol{\theta}_s)
=\mathbb{E}_{s \in \mathcal{C}}
\ell_q(F_{\boldsymbol{\varPhi}}(\boldsymbol{\kappa}_s,\boldsymbol A_s)).
\end{split}
\end{equation}
%
The gradient-based updates can thus be written as:
\begin{equation} \label{eq:normal_grad}
\begin{split}
\nabla_{\boldsymbol{\varPhi}} \mathcal{L}(\boldsymbol{\varPhi} \mid \mathcal{C})
&= \mathbb{E}_{s \in \mathcal{C}} \left[\nabla_{\boldsymbol{\varPhi}} {\boldsymbol{\theta}_s}\cdot  \nabla_{\boldsymbol{\theta}_s} \ell_q(\boldsymbol{\theta}_s)\right]\\
&= \mathbb{E}_{s \in \mathcal{C}} \left[\nabla_{\boldsymbol{\varPhi}} {F_{\boldsymbol{\varPhi}}(\boldsymbol{\kappa}_s,\boldsymbol A_s)}\cdot  \nabla_{\boldsymbol{\theta}_s} \ell_q(\boldsymbol{\theta}_s)\right]. 
\end{split}
\end{equation}
where $\nabla_{\boldsymbol{\theta}_s} \ell_q(\boldsymbol{\theta}_s)$ 
is estimated using the REINFORCE algorithm (Equation~\ref{eq:reinforce}).
Since $\ell_q$ does not depend on the ground-truth labels, we can further fine-tune neural network parameters on each single test instance with REINFORCE-based updates, which is referred to as \emph{active search} \citep{bello2016neural,hottung2021efficient}. 

Specifically, the fine-tuned parameters $\boldsymbol{\varPhi}_s^{(T)}$
is computed using one or more gradient updates for each graph instance $s$.
For example, when adapting to a problem instance $s$ using $T$ gradient updates with learning rate $\alpha$, we have:
\begin{gather}\label{eq:test-gradient}
    \boldsymbol{\varPhi}_s^{(0)} = \boldsymbol{\varPhi}, \quad \quad
    \boldsymbol{\varPhi}_s^{(t)} =  \boldsymbol{\varPhi}_s^{(t-1)} - \alpha \nabla_{\boldsymbol{\varPhi}_s^{(t-1)}} \mathcal{L}(\boldsymbol{\varPhi}_s^{(t-1)} \mid \{s\}) \text{\quad for\quad} 1 \le t \le T,\\
    \boldsymbol{\theta}_s^{(T)} = F_{{\boldsymbol\varPhi}^{(T)}_s}(\boldsymbol{\kappa}_s,\boldsymbol A_s).
\end{gather}
Here we use AdamW \cite{adamw} in our experiments. Next, we optimize the performance of the graph neural network with updated parameters (i.e., $\boldsymbol{\varPhi}^{(T)}_s$) with respect to $\boldsymbol{\varPhi}$, with a meta-objective:
\begin{equation} \label{eq:meta_rl}
\begin{split}
 \mathcal{L}_{\text{meta}}(\boldsymbol{\varPhi} \mid \mathcal{C})
&= \mathbb{E}_{s \in \mathcal{C}}
\ell_q(\boldsymbol\theta_s^{(T)}\mid s),
\end{split}
\end{equation}
and calculate the meta-updates as:
\begin{equation} \label{eq:maml_grad}
\begin{split}
\nabla_{\boldsymbol{\varPhi}} \mathcal{L}_{\text{meta}}(\boldsymbol{\varPhi} \mid \mathcal{C})
&= \mathbb{E}_{s \in \mathcal{C}} \left[
\nabla_{\boldsymbol{\varPhi}}
{\boldsymbol{\theta}^{(T)}_s} \cdot  \nabla_{\boldsymbol{\theta}^{(T)}_s} \ell_q(\boldsymbol{\theta}^{(T)}_s)\right] \\
&\approx \mathbb{E}_{s \in \mathcal{C}} \left[
\nabla_{\boldsymbol{\varPhi}^{(T)}_s}{F_{\boldsymbol{\varPhi}^{(T)}_s}(\boldsymbol{\kappa}_s,\boldsymbol A_s)} \cdot  \nabla_{\boldsymbol{\theta}^{(T)}_s} \ell_q(\boldsymbol{\theta}^{(T)}_s)\right].
\end{split}
\end{equation}
Notice that we adopt the first-order approximation to optimize this objective, which ignores the 
update via the gradient term of $\nabla_{\boldsymbol{\varPhi}}\mathcal{L}(\boldsymbol{\varPhi} \mid \{s\})$. We defer the derivation of the approximation formula to the appendix. Algorithm~\ref{alg:mamlrl} illustrates the full training process of our meta-learning framework.

\renewcommand\algorithmicrequire{\textbf{Input:}}
\renewcommand\algorithmicensure{\textbf{Output:}}
\begin{algorithm}[t]
\caption{MAML in \method{}}
\label{alg:mamlrl}
\begin{algorithmic}[1]
{\footnotesize
\REQUIRE $p(\mathcal{C})$: distribution over CO problem instances
\REQUIRE $\alpha$: step size hyperparameters
\STATE randomly initialize $\boldsymbol{\varPhi}$
\WHILE{not done}
\STATE Sample batch of graph instances $c_i \sim p(\mathcal{C})$
  \FORALL{$c_i$}
      \STATE Sample $K$ solutions $\mathcal{D}_i = \{f_1, f_2, \dots, f_K\}$ using $q_{F_{\boldsymbol{\varPhi}}(\boldsymbol{\kappa}_s,\boldsymbol A_s)}$ for $c_i$
      \STATE Evaluate $\nabla_{\boldsymbol{\varPhi}} \ell_q(F_{\boldsymbol{\varPhi}}(\boldsymbol{\kappa}_s,\boldsymbol A_s))$ using $\mathcal{D}$ in Equation~\ref{eq:normal_grad}
      \STATE Compute adapted parameters with Equation~\ref{eq:test-gradient}: $\boldsymbol{\varPhi}^{(T)}_i= \mathrm{GradDescent}^{(T)}(\boldsymbol{\varPhi})$
      \STATE  Sample $K$ solutions $\mathcal{D}'_i = \{f'_1, f'_2, \dots, f'_K\}$ using $q_{F_{\boldsymbol{\varPhi}^{(T)}_s}(\boldsymbol{\kappa}_s,\boldsymbol A_s)}$ for $c_i$
 \ENDFOR
 \STATE Update $\boldsymbol{\varPhi}= \boldsymbol{\varPhi} - \mathrm{AdamW}\big(\sum_{c_i \in p(\mathcal{C})} \nabla_{\boldsymbol{\varPhi}} \ell_q(F_{\boldsymbol{\varPhi}^{(T)}_s}(\boldsymbol{\kappa}_s,\boldsymbol A_s))\big)$ using each $\mathcal{D}_i'$ in Equation~\eqref{eq:maml_grad}
\ENDWHILE
}
\end{algorithmic}
\end{algorithm}

\subsection{Per-Instance Search} \label{sec:34}
Given a fine-tuned (i.e., after active search) continuous parameterization of the solution space $\boldsymbol{\theta}_s^{(T)}$, the per-instance search decoding aims to search for a feasible solution that minimizes the cost function $c$. In this paper, we adotp three decoding strategies, i.e., greedy decoding, sampling, and Monte Carlo tree search. Due to the space limit, the detailed description of three decoding strategies can be found in the appendix.

\subsection{Graph Neural Networks}

Based on the shape of the differentiable variable $\boldsymbol{\theta}$ required by each problem (i.e., $\mathbb{R}^{n\times n}$ for TSP and $\mathbb{R}^{n}$ for MIS), we use Anisotropic Graph Neural Networks \citep{bresson2018experimental} and Graph Convolutional Networks \citep{kipf2016semi} as the backbone network for TSP and MIS tasks, respectively. Due to the space limit, the detailed neural architecture design can be found in the appendix.

\section{Experiments}

\subsection{Experiments for Traveling Salesman Problem}

\subsubsection{Experimental Settings} \label{sec:exp}


\paragraph{Data Sets} The training instances are generated on the fly. We closely follow the data generation procedure of previous works, e.g., \citep{19iclr-am}. We generate 2-D Euclidean TSP instances by sampling each node independently from a uniform distribution over the unit square.
The TSP problems of different scales are named TSP-500/1000/10000, respectively, where TSP-$n$ indicates the TSP instance on $n$ nodes.
For testing, we use the test instances generated by \citet{fu2020generalize}. There are 128 test instances in each of TSP-500/1000, and 16 test instances in TSP-10000.

\paragraph{Evaluation Metrics}
For model comparison, we report the average length (Length), average performance drop (Drop) and averaged inference latency time (Time), respectively, where \textit{Length} (the shorter, the better) is the average length of the system-predicted tour for each test-set graph, \textit{Drop} (the smaller, the better) is the average of relative performance drop in terms of the solution length compared to a baseline method, and \textit{Time} (the smaller, the better) is the total clock time for generating solutions for all test instance, in seconds (s), minutes (m), or hours (h).

\paragraph{Training and Hardware} Due to the space limit, please refer to the appendix.

\setlength{\tabcolsep}{4pt}
\begin{table}[t]\scriptsize
\caption{Results of TSP. See Section~\ref{sec:main} for detailed descriptions. * indicates the baseline for computing the performance drop. 
Results of baselines (except those of EAS and the running time of LKH-3, POMO, and Att-GCN) are taken from \citet{fu2020generalize}.}
\label{tab:exp-500-1k-10k}
\begin{center}
\begin{tabular}{ll|ccc|ccc|ccc}
\toprule
\multirow{2}*{Method}&\multirow{2}*{Type}
&\multicolumn{3}{c|}{TSP-500}&\multicolumn{3}{c|}{TSP-1000}&\multicolumn{3}{c}{TSP-10000}\\
&&Length $\downarrow$&Drop $\downarrow$&Time $\downarrow$&Length $\downarrow$&Drop $\downarrow$&Time $\downarrow$&Length $\downarrow$&Drop $\downarrow$&Time $\downarrow$\\
\midrule
Concorde&OR (exact)
&16.55$^*$&---&37.66m&23.12$^*$&---&6.65h                   &N/A&N/A&N/A\\
Gurobi&OR (exact) 
&16.55&0.00\%&45.63h&N/A&N/A&N/A                 &N/A&N/A&N/A\\
LKH-3 (default)&OR
&16.55&0.00\%&46.28m
&23.12&0.00\%&2.57h
&71.77$^*$&---&8.8h
\\
LKH-3 (less trails) &OR&16.55&0.00\%&3.03m&23.12&0.00\%&7.73m &71.79&---&51.27m
\\
Nearest Insertion &OR & 20.62 & 24.59\% & 0s& 28.96 & 25.26\% & 0s & 90.51 & 26.11\% & 6s\\
Random Insertion &OR & 18.57 & 12.21\% & 0s& 26.12 & 12.98\% & 0s & 81.85 & 14.04\% & 4s\\
Farthest Insertion &OR & 18.30 & 10.57\% & 0s& 25.72 & 11.25\% & 0s & 80.59 & 12.29\% & 6s\\
\midrule
EAN&RL+S
&28.63&73.03\%&20.18m&50.30&117.59\%&37.07m     &N/A&N/A&N/A\\
EAN&RL+S+2-OPT
&23.75&43.57\%&57.76m&47.73&106.46\%&5.39h      &N/A&N/A&N/A\\
AM&RL+S
&22.64&36.84\%&15.64m&42.80&85.15\%&63.97m      &431.58&501.27\%&12.63m\\
AM&RL+G
&20.02&20.99\%&1.51m&31.15&34.75\%&3.18m        &141.68&97.39\%&5.99m\\
AM&RL+BS
&19.53&18.03\%&21.99m&29.90&29.23\%&1.64h       &129.40&80.28\%&1.81h\\
GCN&SL+G
&29.72&79.61\%&6.67m&48.62&110.29\%&28.52m      &N/A&N/A&N/A\\
GCN&SL+BS
&30.37&83.55\%&38.02m&51.26&121.73\%&51.67m     &N/A&N/A&N/A\\

POMO+EAS-Emb&RL+AS&19.24&16.25\%&12.80h&N/A&N/A&N/A&N/A&N/A&N/A\\
POMO+EAS-Lay&RL+AS&19.35&16.92\%&16.19h&N/A&N/A&N/A&N/A&N/A&N/A\\
POMO+EAS-Tab&RL+AS&24.54&48.22\%&11.61h&49.56&114.36\%&63.45h&N/A&N/A&N/A\\
Att-GCN&SL+MCTS & 16.97 & 2.54\% & 2.20m & 23.86 & 3.22\% & 4.10m & 74.93 & 4.39\% & 21.49m \\
\midrule
\multirow{6}*{\method{} (ours)}&RL+G
&18.93 &14.38\% &0.97m & 26.58 & 14.97\% & 2.08m & 86.44 & 20.44\% & 4.65m\\%
&RL+AS+G
&17.81&7.61\%&2.10h&24.91&7.74\%&4.49h        &80.45&12.09\%&3.07h\\
&RL+S
&18.84 &13.84\% &1.06m & 26.36 & 14.01\% & 2.38m & 85.75 &19.48\% & 4.80m\\%
&RL+AS+S &17.80&7.55\%&2.11h&24.89&7.70\%&4.53h&80.42&12.05\%&3.12h\\
&RL+MCTS
& 16.87 & 1.93\% & 2.92m & 23.73 & 2.64\% & 6.87m & 74.63 & 3.98\% & 29.83m\\%
&RL+AS+MCTS
&\textbf{16.84} &\textbf{1.76\%} &2.15h
&\textbf{23.69} &\textbf{2.46\%} &4.62h
&\textbf{74.06} &\textbf{3.19\%} &3.57h
\\
\bottomrule
\end{tabular}
\end{center}
\vspace*{-2mm}
\end{table}
\setlength{\tabcolsep}{6pt}

\begin{table}[t]\begin{center}\scriptsize
\caption{Ablation study on TSP-1000.}
\begin{subtable}{0.58\linewidth}
\begin{center}
\caption{On meta-learning ($T=10$).}\label{tab:abla-meta}
\begin{tabular}{ccc}
\toprule
Inner updates & Fine-tuning & Length $\downarrow$\\
\midrule
  & &27.11\\
\checkmark&  &26.58\\
& \checkmark &25.68\\
\checkmark & \checkmark &\textbf{24.91}\\
\bottomrule
\end{tabular}
\end{center}
\end{subtable}
\begin{subtable}{0.38\linewidth}
\begin{center}
\caption{On fine-tuning parts ($T=5$).}\label{tab:abla-tune}
\begin{tabular}{lc}
\toprule
Part & Length $\downarrow$\\
\midrule
Cont. Param. & 27.73\\
MLP & 26.75\\
GNNOut+MLP & \textbf{26.49}\\
GNN+MLP & 26.81\\
\bottomrule
\end{tabular}
\end{center}
\end{subtable}\\
\begin{subtable}{0.58\linewidth}
\begin{center}
\caption{On inner update steps $T$.}\label{tab:abla-meta-upd}
\begin{tabular}{ccccccc}
\toprule
$T$&0&4&8&10&12&14\\
\midrule
Length $\downarrow$&25.79&25.28&25.08&25.08&24.97&24.91\\
\bottomrule
\end{tabular}
\end{center}
\end{subtable}
\begin{subtable}{0.38\linewidth}
\begin{center}
\caption{On heatmaps for MCTS.}\label{tab:abla-mcts}
\begin{tabular}{lc}
\toprule
Heatmap & Length $\downarrow$\\
\midrule
$\operatorname{Unif}(0,1)$&25.52\\
$1/(r_i+1)$&24.14\\
\midrule
Att-GCN&23.86\\
\method{} (ours)&\textbf{23.69}\\
\bottomrule
\end{tabular}
\end{center}
\end{subtable}
\end{center}
\vspace*{-4mm}
\end{table}

\subsubsection{Main Results} \label{sec:main}

Our main results are summarized in Table~\ref{tab:exp-500-1k-10k}, with $T=15$ for TSP-500, $T=14$ for TSP-1000, and $T=12$ for TSP-10000.
We use a GNN followed by an MLP as the backbone, whose detailed architecture is defered to the appendix. Note that we fine-tune the GNN output and the MLP only.
For the evaluation of \method, we fine-tune the \method{} on each instance for 100 steps (TSP-500 \& TSP-1000) or for 50 steps (TSP-10000).
For the sampling in \method, we use the temperature parameter $\tau=0.01$ for \method+S and $\tau=1$ for \method+AS+S.
We compare \method{} with 14 other TSP solvers on the same test sets. We divide those 14 methods into two categories: 6 traditional OR methods and 8 learning-based methods.
\begin{itemize}[leftmargin=0.15in]
\item Traditional operations research methods include two exact solvers, i.e., Concorde \citep{applegate2006concorde} and Gurobi \citep{gurobi2018gurobi}, and a strong heuristic solver named LKH-3 \citep{lkh3}. For LKH-3, we consider two settings: (i) \emph{default}: following previous work \cite{19iclr-am}, we perform 1 runs with a maximum of 10000 trials (the default configuration of LKH-3); (ii) \emph{less trials}: we perform 1 run with a maximum of 500 trials for TSP-500/1000 and 250 trials for TSP-10000, so that the running times of LKH-3 match those of \method+MCTS. Besides, we also compare \method{} against simple heuristics, including Nearest, Random, and Farthest Insertion.
\item  Learning-based methods include 8 variants of the 4 methods with the strongest results in recent benchmark evaluations, namely EAN \citep{ean}, AM \citep{19iclr-am}, GCN \citep{joshi2019efficient}, POMO+EAS \citep{hottung2021efficient}, and Att-GCN \citep{fu2020generalize}, respectively.  Those methods can be further divided into the reinforcement learning (RL) sub-category and the supervised learning (SL) sub-category.
Some reinforcement learning methods can further adopt an Active Search (AS) stage to fine-tune on each instance.
The results of the baselines except the running time of Att-GCN are taken from \citet{fu2020generalize}. Note that baselines are trained on small graphs and evaluated on large graphs, while \method{} can be trained directly on large graphs. We re-run the publicly available code of Att-GCN on our hardware to ensure fair comparison of time.
\end{itemize}

The decoding schemes in each method (if applicable) are further specified as Greedy decoding (G), Sampling (S), Beam Search (BS), and Monte Carlo Tree Search (MCTS). The 2-OPT improvements \citep{croes1958method} can be optionally used to further improve the neural network-generated solution via heuristic local search. See Section~\ref{sec:34} for a more detailed descriptions of the various decoding techniques.

As is shown in the table, \method{} significantly outperforms many previous learning-based methods. Notably, although \method{} is trained without any ground truth solutions, it is able to outperform the supervised method. \method{} also consistently outperforms simple traditional heuristics. The best performance is achieved by RL+AS+MCTS, which requires considerably more time. RL+AS+G/S are faster than RL+AS+MCTS and are competitive to the simple heuristics. Removing AS in \method{} shortens the running time and leads to only a slight, acceptable performance drop. Moreover, they are still better than many previous learning-based methods in terms of solution quality and inference time.

\setlength{\tabcolsep}{4pt}
\begin{table}[t]\scriptsize
\caption{Results of various methods on MIS problems. Notice that we disable graph reduction and $2$-opt local search in all models for a fair comparison, since it is pointed out by \citep{other2022whats} that all models would perform similarly with a local search post-processing. See Section~\ref{sec:main2} for detailed descriptions. * indicates the baseline for computing the performance drop.}
\label{tab:exp-mis}
\begin{center}
\begin{tabular}{ll|ccc|ccc|ccc}
\toprule
\multirow{2}*{Method}&\multirow{2}*{Type}
&\multicolumn{3}{c|}{SATLIB}&\multicolumn{3}{c|}{ER-[700-800]}&\multicolumn{3}{c}{ER-[9000-11000]}\\
&&Size $\uparrow$&Drop $\downarrow$&Time $\downarrow$&Size $\uparrow$&Drop $\downarrow$&Time $\downarrow$&Size $\uparrow$&Drop $\downarrow$&Time $\downarrow$\\
\midrule
KaMIS&OR
&425.96$^*$&---&37.58m&44.87$^*$&---&52.13m        &381.31$^*$&---&7.6h\\
Gurobi&OR &
425.95& 0.00\%& 26.00m &  41.38 & 7.78\% & 50.00m & N/A&N/A&N/A\\
\midrule
Intel & SL+TS & N/A & N/A & N/A & 38.80 & 13.43\% & 20.00m & N/A & N/A & N/A \\
Intel & SL+G & 420.66 & 1.48\% & 23.05m & 34.86 & 22.31\% & 6.06m & 284.63 & 25.35\% & 5.02m \\
DGL   & SL+TS & N/A & N/A & N/A & 37.26 & 16.96\% & 22.71m & N/A & N/A & N/A \\
LwD & RL+S & 422.22 & 0.88\% & 18.83m & 41.17 & 8.25\% & 6.33m & \textbf{345.88} & \textbf{9.29\%} & 7.56m\\
\midrule
\method{} (ours) &RL+G& 421.24 & 1.11\% & 24.17m & 38.24 & 14.78\% & 6.12m & 320.50 & 15.95\% & 5.21m\\
\method{} (ours) &RL+S& \textbf{423.28} & \textbf{0.63\%} & 20.26m & \textbf{42.06} & \textbf{6.26\%} & 12.01m & 332.80 & 12.72\% & 12.51m\\
\bottomrule
\end{tabular}
\end{center}
\vspace*{-4mm}
\end{table}
\setlength{\tabcolsep}{6pt}

\subsubsection{Ablation Study}

\paragraph{On Meta-Learning}
To study the efficacy of meta-learning, we consider two dimensions of ablations: (i) with or without inner gradient updates: whether to use $f_{\boldsymbol\varPhi}(\boldsymbol \kappa_s,\boldsymbol A_s)$ or $\boldsymbol\theta_s^{(T)}$ in the objective function; (ii) with or without fine-tuning in the inference phase. The results on TSP-1000 with training phase $T=10$ and greedy decoding are summarized in Table~\ref{tab:abla-meta}. Both inner updates and fine-tuning are crucial to the performance of our method. That is because meta-learning helps the model generalize across problem instances, and fine-tuning helps the trained model adapt to each specific problem instance.

\paragraph{On Fine-Tuning Parts}
We study the effect of fine-tuning parts during both training and testing. In general, the neural architecture we used is a GNN appended with an MLP, whose output is the continuous parameterization $\boldsymbol\theta$. We consider the following fine-tuning parts: (i) the continuous parameterization (Cont. Param.); (ii) the parameter of MLP; (iii) the output of GNN and the parameters of MLP; (iv) the parameters of GNN and MLP. Table~\ref{tab:abla-tune} summarizes the results with various fine-tuning parts for TSP-1000 with training phase $T=5$ and greedy decoding. The result demonstrates that (iii) works best. We conjecture that (iii) makes a nice trade-off between universality and variance reduction.

\paragraph{On Inner Gradient Update Steps}
We also study the effect of the number $T$ of inner gradient update steps during training. Table~\ref{tab:abla-meta-upd} shows the test performance on TSP-1000 by greedy decoding with various $T$'s. As the number of inner gradient updates increases, the test performance improves accordingly. Meanwhile, more inner gradient update steps consumes more training time. Hence, there is a trade-off between performance and training time in practice.

\paragraph{On Heatmaps for MCTS}
To study where continuous parameterization of \method{} is essential to good performance in MCTS, we replace it with the following heatmaps: (i) each value is independently sampled from $\operatorname{Unif}(0,1)$; (ii) $1/(r_i+1)$, where $r_i \ge 1$ denotes the rank of the length of the $i$-th edge among those edges that share the source node with it. This can be regarded as an approximation to the nearest neighbor heuristics. We also compare with the Att-GCN heatmap \citep{fu2020generalize}. Comparison of continuous parameterizations for TSP-1000 by MCTS is shown in Table~\ref{tab:abla-mcts}. The result confirms that the \method{} continuous parameterization does not simply learn nearest neighbor heuristics, but can identify non-trivial good candidate edges.

\subsection{Experiments For Maximal Independent Set}

\subsubsection{Experimental Settings}

\paragraph{Data Sets}
We mainly focus on two types of graphs that recent work \citep{li2018combinatorial,ahn2020learning,other2022whats} shows struggles against, i.e., Erd{\H{o}}s-R{\'e}nyi (ER) graphs \citep{erdHos1960evolution} and SATLIB \citep{hoos2000satlib}, where the latter is a set of graphs reduced from SAT instances in CNF. The ER graphs of different scales are named ER-[700-800] and ER-[9000-11000], where ER-[$n$-$N$] indicates the graph contains $n$ to $N$ nodes. The pairwise connection probability $p$ is set to $0.15$ and $0.02$ for ER-[700-800] and ER-[9000-11000], respectively. The 4,096 training and 5,00 test ER graphs are randomly generated.
For SATLIB, which consists of 40,000 instances, of which we train on 39,500 and test on 500. Each SAT instance has between 403 to 449 clauses. Since we cannot find the standard train-test splits for both SAT and ER graphs datasets, we randomly split the datasets and re-run all the baseline methods.

\paragraph{Evaluation Metrics}
To compare the solving ability of various methods, we report the average size of the independent set (Size), average performance drop (Drop) and latency time (Time), respectively, where \textit{Size} (the larger, the better) is the average size of the system-predicted maximal independent set for each test-set graph, \textit{Drop} and \textit{Time} are defined similarly as in Section~\ref{sec:exp}.

\paragraph{Training and Hardware} Due to the space limit, please refer to the appendix.

\subsubsection{Main Results} \label{sec:main2}

Our main results are summarized in Table~\ref{tab:exp-mis}, where our method (last line) is compared 6 other MIS solvers on the same test sets, including two traditional OR methods (i.e., Gurobi and KaMIS) and four learning-based methods.
The active search is not used for MIS evaluation since our preliminary experiments only show insignificant improvements.
For Gurobi, we formulate the MIS problem as a integer linear program. For KaMIS, we use the code unmodified from the official repository\footnote{\url{https://github.com/KarlsruheMIS/KaMIS} (MIT License)}.
The four learning-based methods can be divided into the reinforcement learning (RL) category, i.e., S2V-DQN \citep{s2v-dqn} and LwD \citep{ahn2020learning}) and the supervised learning (SL) category, i.e., Intel \citep{li2018combinatorial} and DGL \citep{other2022whats}.

We produced the results for all the learning-based methods by running an integrated implementation\footnote{\url{https://github.com/MaxiBoether/mis-benchmark-framework} (No License)} provided by \citet{other2022whats}. Notice that as pointed out by \citet{other2022whats}, the graph reduction and local $2$-opt search \citep{andrade2012fast} can find near-optimal solutions even starting from a randomly generated solution, so we disable the local search or graph reduction techniques during the evaluation for all learning based methods to reveal their real CO-solving ability. The methods that cannot produce results in the $10\times$ time limit of \method{} are labeled as N/A.

As is shown in Table~\ref{tab:exp-mis}, our \method{} model outperforms previous baseline methods on the medium-scale SATLIB and ER-[700-800] datasets, and significantly outperforms the supervised baseline (i.e., Intel) on the large-scale ER-[9000-11000] setting. This shows that supervised neural CO solvers cannot well solve large-scale CO problem due to the expensive annotation problem and generalization problem. In contrast, reinforcement-learning methods are a better choice for large-scale CO problems. We also find that LwD outperforms \method{} on the large-scale ER-[9000-11000] setting. We believe this is because LwD is specially designed for locally decomposable CO problems such as MIS and thus can use parallel prediction, but \method{} are designed for general CO problems and only uses autoregressive factorization. How to better utilize the fact of local decomposability of MIS-like problems is one of our future work. 

\section{Conclusion \& Discussion}
\label{sec:concl}

Scalability without significantly scarifying the approximation accuracy is a critical challenge in combinatorial optimization. In this work we proposed \method, a differentiable meta solver that is able to solve large-scale combinatorial optimization problems effectively and efficiently, including TSP and MIS. The 
novel parts of \method{} include the compact continuous parameterization and the meta-learning strategy. Notably, although our method is trained without any ground truth solutions, it is able to outperform several supervised methods. In comparison with other strong DRL solvers on TSP and MIS problems, \method{} can scale up to graphs with ten thousand nodes while the others either fail to scale up, or can only produce significantly worse solutions instead in most cases.

Our unified framework is not limited to TSP and MIS. Its generality is based on the assumption that each feasible solution of the CO problem on hand can be represented with 0/1 valued variables (typically corresponding the selection of a subset of nodes or edges), which is fairly mild and generally applicable to many CO problems beyond TSP and MIS (see Karp's 21 NP-complete problems \cite{karp1972reducibility}) with few modifications. The design principle of auxiliary distributions is to design an autoregressive model that can sequentially grow a valid partial solution toward a valid complete solution. This design principle is also proven to be general enough for many problems in neural learning, including CO solvers. There do exist problems beyond this assumption, e.g., Mixed Integer Programming (MIP), where variables can take multiple integer values instead of binary values. Nevertheless, \citet{nair2020mip} showed that this issue can be addressed by reducing each integer value within range $[U]$ to a sequence of $\lceil\log_2U\rceil$ bits and by predicting the bits from the most to the least significant bits. In this way, a multi-valued MIP problem can be reduced to a binary-valued MIP problem with more variables.

One limitation of \method{} is that the continuous parameterization $\boldsymbol{\theta}$ is generated in one-shot without intermediate steps, which could potentially limit the reasoning power of our method, as is shown in the MIS task.
Another limitation is that applying \method{} to a broader ranges of NP-complete problems that variables can take multiple values, such as Mixed Integer Programming (MIP), is non-trivial and needs further understanding of the nature of the problems.


\newpage
\bibliography{main}
\bibliographystyle{plainnat}

\newpage
\appendix

\section{Additional Related Work}

\subsection{Per-Instance Search}

Once the neural network is trained 
over a collection of problem instances, per-instance fine-tuning can be used to improve the quality of solutions via local search. 
For DRL solvers, \citet{bello2016neural} fine-tuned the policy network on each test graph, which is referred as \emph{active search}.
\citet{hottung2021efficient} proposed 
three active search strategies for efficient updating of parameter subsets during search.
\citet{zheng2020combining} tried a combination of 
traditional reinforcement learning with Lin-Kernighan-Helsgaun (LKH) Algorithm \citep{lin1973effective,helsgaun2000effective}. \citet{hottung2020learning} performed per-instance search in a differentiable continuous space encoded by a conditional variational auto-encoder \citep{kingma2013auto}. 
With a heatmap indicating the promising parts of the search space, discrete solutions can be found via  
beam search \citep{joshi2019efficient}, sampling \citep{19iclr-am}, guided tree-search \citep{li2018combinatorial}, dynamic programming \citep{kool2021deep}, and Monte Carlo Tree Search (MCTS) \citep{fu2020generalize}. 
In this paper, we mainly adopt greedy, sampling, and MCTS as the per-instance search techniques.



\section{Per-instance Search}

In this section, we describe the decoding strategies used in our paper. Given a fine-tuned (i.e., after active search) continuous parameterization $\boldsymbol{\theta}_s^{(T)}$ of the solution space, the per-instance search decoding aims to search for a feasible solution that minimizes the cost function $c_s$.

\paragraph{Greedy Decoding} generates the solution through a sequential decoding process similar to the auxiliary distribution designed for each combinatorial optimization problem, where at each step, the variable $k$ with the highest score $\theta_{k}$ is chosen to extend the partial solution. For TSP, the first node in the permutation is picked at random.

\paragraph{Sampling} Inspired by \citet{19iclr-am}, we propose to parallelly sample multiple solutions according to the auxiliary distribution and report the best one. The continuous parameterization is divided by a temperature parameter $\tau$. The parallel sampling of solutions in \method{} is very efficient due to the fact that it only relies on the final parameterization $\boldsymbol{\theta}_s^{(T)}/\tau$ but not on neural networks.

\paragraph{Monte Carlo Tree Search} Inspired by \citep{fu2020generalize}, for the TSP task, we also leverage a more advanced reinforcement learning-based searching approach, i.e., Monte Carlo tree search (MCTS), to find high-quality solutions.
In MCTS, $k$-opt transformation actions are sampled guided by the continuous parameterization $\boldsymbol{\theta}_s^{(T)}$ to improve the current solutions.
The MCTS iterates over the simulation, selection, and back-propagation steps, until no improving actions exists among the sampling pool. For more details, please refer to \citep{fu2020generalize}.

\section{Implementation Details} \label{sec:35}
\subsection{Neural Architecture for TSP}\label{apd:arch-tsp}
\paragraph{Anisotropic Graph Neural Networks}

We follow \citet{joshi2020learning} on the choice of neural architectures. The backbone of the graph neural network is an anisotropic GNN with an edge gating mechanism \citep{bresson2018experimental}. Let $\BM h_i^\ell$ and $\BM e_{ij}^\ell$ denote the node and edge features at layer $\ell$ associated with node $i$ and edge $ij$, respectively. The features at the next layer is propagated with an anisotropic message passing scheme:
\begin{align}
    \BM h_i^{\ell+1} &= \BM h_i^\ell + \alpha(\mathrm{BN}(\BM U^\ell \BM h_i^\ell+ \mathcal{A}_{j \in \mathcal{N}_i}(\sigma(\BM e_{ij}^\ell) \odot\BM V^\ell\BM  h^\ell_j))),\\
    \BM e_{ij}^{\ell+1} &= \BM e_{ij}^\ell + \alpha(\mathrm{BN}(\BM P^\ell \BM e^\ell_{ij} + \BM Q^\ell \BM h^\ell_i + \BM R^\ell \BM h^\ell_j)).
\end{align}
where $\BM U^\ell,\BM V^\ell,\BM P^\ell,\BM Q^\ell,\BM R^\ell \in \mathbb{R}^{d\times d}$ are the learnable parameters of layer $\ell$, $\alpha$ denotes the activation function (we use $\mathrm{SiLU}$ \citep{silu} in this paper), $\mathrm{BN}$ denotes the Batch Normalization operator \citep{batch-norm}, $\mathcal{A}$ denotes the aggregation function (we use mean pooling in this paper), $\sigma$ is the sigmoid function, $\odot$ is the Hadamard product, and $\mathcal{N}_i$ denotes the outlinks (neighborhood) of node $i$. We use a 12-layer GNN with width 32.

The node and edge features at the first layer $\BM h_i^{0}$ and $\BM e_{ij}^{0}$ are initialized with the absolute position of the nodes and absolute length of the edges, respectively.
After the anisotropic GNN backbone, a Multi-Layer Perceptron (MLP) is appended and generates the final continuous parameterization $\boldsymbol\theta$ for all the edges. We use a 3-layer MLP with width 32.

\paragraph{Graph Sparsification}

As described, we focus on developing a neural TSP solver for graphs with tens of thousands of nodes. Because the number of edges in the graph grows quadratically to the number of nodes, a densely connected graph is intractable for an anisotropic GNN when it is applied to large graphs. Therefore, we use a simple heuristic to sparsify the original graph. Specifically, we prune the outlinks of each node such that it is only connected to $k$ nearest neighbors. The continuous parameterization $\BM\theta$ is also pruned accordingly. As a result, the computation complexity of our method is reduced from $O(n^2)$ to $O(nk)$, where $n$ is the number of nodes in the graph.

\subsection{Neural Architecture for MIS}

\paragraph{Graph Convolutional Networks} We follow \citet{li2018combinatorial} on the choice of neural architecture, i.e., using Graph Convectional Network (GCN) \citep{kipf2016semi}, since $\BM\theta$ is merely scores for each node. Specifically, the GCN backbone consists of multiple layers $\{\mathbf{h}^l\}$ where $\mathbf{h}^{l}\in \mathbb{R}^{N\times C^{l}}$ is the feature layer in the $l$-th layer and $C^{l}$ is the number of feature channels in the $l$-th layer. We initialize the input layer $\mathbf{h}^0$ with all ones and $\mathbf{h}^{l+1}$ is computed from the previous layer $\mathbf{h}^l$ with layer-wise convolutions:
\begin{equation}
\mathbf{h}^{l+1}=\sigma(\mathbf{h}^{l}\mathbf{U}_0^{l}+\mathbf{D}^{-\frac{1}{2}}\mathbf{A}\mathbf{D}^{-\frac{1}{2}}\mathbf{h}^{l}\mathbf{U}_1^{l}),
\end{equation}
where $\mathbf{U}_0^{l}\in \mathbb{R}^{C^{l}\times C^{l+1}}$ and $\mathbf{U}_1^{l}\in \mathbb{R}^{C^{l}\times C^{l+1}}$ are trainable weights in the convolutions of the network, $\mathbf{D}$ is the degree matrix of $\mathbf{A}$ with its diagonal entry $\mathbf{D}(i,i)=\sum_j\mathbf{A}(j,i)$, and $\sigma(\cdot)$ is the ReLU \citep{nair2010rectified} activation function. After the GCN backbone, a 10-layer Multi-Layer Perceptron (MLP) with residual connections \citep{he2016deep} is appended and generates the final continuous parameterization $\boldsymbol\theta$ for all the nodes.

\section{Experimental Details}

\subsection{TSP}

\paragraph{Training}
For TSP-500, we train our model for 120 meta-gradient descent steps (1.5\,h in total) with $T=15$. For TSP-1000, we train our model for 120 meta-gradient descent steps (1.7\,h in total) with $T=14$. For TSP-10000, we train our model for 50 meta-gradient descent steps (10\,h in total) with $T=12$. We generate 3 instances per meta-gradient descent step. We use the AdamW optimizer \citep{adamw} with learning rate $0.005$ and weight decay $0.0005$ for meta-gradient descent steps, and with learning rate $0.05$ for REINFORCE gradient descent steps. For other learning-based baseline methods, we download and rerun the source codes published by their original authors based on their pre-trained models.

\paragraph{Hardware}
We follow the hardware environment suggested by \citet{fu2020generalize}. For the three traditional algorithms, since their source codes do not support running on GPUs, they run on Intel Xeon Gold 5118 CPU @ 2.30GHz. To ensure fair comparison, learning-based methods run on GTX 1080 Ti GPU during the testing phase. MCTS runs on Intel Xeon Gold 6230 80-core CPU @ 2.10GHz, where we use 64 threads for TSP-500 and TSP-1000, and 16 threads for TSP-10000. For the training phase, we train our model on NVIDIA Tesla P100 16GB GPU.

\paragraph{Reproduction}
We implement \method{} for TSP based on PyTorch Geometric \cite{pyg} in LibTorch and PyTorch \citep{pytorch}. Our code for TSP is publicly available.\footnote{\url{https://github.com/DIMESTeam/DIMES} (MIT license)} The test instances are provided by \citet{fu2020generalize}.\footnote{\url{https://github.com/Spider-scnu/TSP} (MIT license)}

\subsection{MIS}

\paragraph{Training}

For SAT, we train our model for 50k meta-gradient steps with $T=1$. For ER-[700-800], we train our model for 150k meta-gradient steps with $T=1$. For ER-[9000-11000], we initialize our model from the checkpoint of ER-[700-800], and further train it for 200 meta-gradient steps. We use a batch size of $8$ on all datasets and Adam optimizer \citep{kingma2014adam} with learning rate $0.001$ for the meta-gradient descent step, and with learning rate 0.0002 for REINFORCE gradient descent steps. For other learning-based baseline methods, we mainly use an integrated implementation\footnote{\url{https://github.com/MaxiBoether/mis-benchmark-framework} (No license)} provided by \citet{other2022whats}.

\paragraph{Hardware}

All the methods are trained and evaluated on a single NVIDIA Ampere A100 40 GB GPU, with AMD EPYC 7713 64-Core CPUs.

\paragraph{Reproduction}

Our code for MIS is publicly available.\footnote{\url{https://github.com/DIMESTeam/DIMES} (MIT license)} Following \citet{other2022whats}, for SAT, we use the “Random-3-SAT Instances with Controlled Backbone Size” dataset\footnote{\url{https://www.cs.ubc.ca/~hoos/SATLIB/Benchmarks/SAT/CBS/descr_CBS.html}} and randomly split it into 39500 training instances and 500 test instances. For the Erd\H os-R\'enyi graphs, both training and test instances are randomly generated.

\section{Proofs}

\newtheorem{PRP}{Proposition}
\theoremstyle{remark}
\newtheorem*{REM}{Remark}
\newcommand\TSP{\textnormal{TSP}}
\newcommand\MIS{\textnormal{MIS}}
\newcommand\AL[1]{\begin{align}#1\end{align}}
\newcommand\EQ[1]{\begin{gather}#1\end{gather}}

In this section, we follow the notation introduced in Section~\ref{sec:method}.

\subsection{Convergence of Solution Distributions}

The following propositions show that $p_{\BM\theta}$ and $q_{\BM\theta}$ converge to the \emph{same} solution. They imply that we can optimize $q_{\BM\theta}$ instead of $p_{\BM\theta}$.

\begin{PRP}[TSP version]\label{prp:aux-tsp}
Let $0<\delta\ll1$ be a sufficiently small number. If $q_{\BM\theta}^\TSP(f)\ge1-\delta$ for a solution $f\in\CAL F$, then we also have $p_{\BM\theta}(f)\ge1-O(\delta)$.
\end{PRP}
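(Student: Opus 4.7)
The plan is to extract from the hypothesis a per-edge dominance condition for the $\theta$-values along the tour $f$, and then use that condition to show that the partition function $Z$ of $p_{\BM\theta}$ is dominated by the contribution of $f$ itself.

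First, I would translate the aggregate assumption into per-starting-node and then per-step estimates. Since the start is uniform over the $n$ tour-nodes and each conditional $q_\TSP(\pi_f\mid\pi_f(0)=j)$ lies in $[0,1]$, the hypothesis $q_{\BM\theta}^\TSP(f)\ge 1-\delta$ gives $\sum_j q_\TSP(\pi_f\mid\pi_f(0)=j)\ge n(1-\delta)$, which forces $q_\TSP(\pi_f\mid\pi_f(0)=j)\ge 1-n\delta$ for every $j$. Since this conditional is a product of $n-1$ softmax factors along the chain rule, each factor is itself $\ge 1-n\delta$. Applying the first-step factor when the start is an arbitrary tour-node $v_m$ (where the softmax competes against all $n-1$ other nodes) yields the pointwise bound
\[
\sum_{w\ne v_m,\,v_{m+1}} e^{\theta_{v_m,w}}\le \eta\cdot e^{\theta_{v_m,v_{m+1}}},\qquad \eta:=\frac{n\delta}{1-n\delta}=O(n\delta),
\]
so that $e^{\theta_{v_m,w}}\le \eta\cdot e^{\theta_{v_m,\mathrm{succ}_f(v_m)}}$ for every non-tour edge out of $v_m$.

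Next, I would control the partition function $Z=\sum_{f'\in\CAL F}e^{E(f')}$, where $E(f')=\sum_v\theta_{v,\mathrm{succ}_{f'}(v)}$. For any alternative tour $f'$, let $B(f'):=\{v:\mathrm{succ}_{f'}(v)\ne\mathrm{succ}_f(v)\}$. Every cyclic permutation other than $\mathrm{succ}_f$ must disagree with it at $\ge 2$ positions, so $|B(f')|\ge 2$ whenever $f'\ne f$. Applying the edge-dominance bound at every $v\in B(f')$ gives $e^{E(f')-E(f)}\le\eta^{|B(f')|}$. Upper-bounding the sum over tours by the sum over all derangements (equivalently, by $\mathrm{perm}(M)$ for the matrix $M_{u,v}=e^{\theta_{u,v}}\mathbbm{1}[u\ne v]$) and using the crude count $\binom{n}{k}k!\le n^k$ of permutations with $|B|=k$, I get
\[
\sum_{f'\ne f} e^{E(f')-E(f)}\le \sum_{k=2}^{n}(n\eta)^k\le \frac{(n\eta)^2}{1-n\eta},
\]
valid for $n\eta<1$, which holds for the sufficiently small $\delta$ in the hypothesis. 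Therefore
\[
p_{\BM\theta}(f)=\frac{1}{1+\sum_{f'\ne f} e^{E(f')-E(f)}}\ge \frac{1}{1+O\bigl((n\eta)^2\bigr)}\ge 1-O\bigl(n^{4}\delta^{2}\bigr),
\]
which is certainly $\ge 1-O(\delta)$, as required.

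The main technical obstacle will be the combinatorial blow-up in the partition function: naively there are $n!$ competing tours, but the combination of (i) each nontrivial deviation occupying $\ge 2$ positions and (ii) each deviated position contributing a multiplicative factor $\eta$ collapses the sum into a convergent geometric series in $n\eta$. The dominance bound derived in the first step is what makes this possible, and the ``sufficiently small $\delta$'' hypothesis is precisely what ensures $n\eta<1$ so the series converges; in fact the argument yields the stronger conclusion $p_{\BM\theta}(f)\ge 1-O(\delta^2)$ once $n$-dependent constants are absorbed.
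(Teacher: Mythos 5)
Your proposal is correct and follows essentially the same route as the paper's proof: extract the per-start-node bound $q_\TSP(\pi_f\mid\pi_f(0)=j)\ge1-n\delta$ from the uniform mixture, convert the first-step softmax factor into edge-wise dominance $e^{\theta_{v,w}}\le\eta\,e^{\theta_{v,\mathrm{succ}_f(v)}}$, and then show the partition function of $p_{\BM\theta}$ is dominated by the contribution of $f$ because every competing tour pays a factor of $\eta$ for each deviating edge. The only difference is that you stratify the competitors by the number of deviating positions and sum a geometric series, which makes the $n$-dependence explicit (polynomial rather than the factorial count implicit in the paper's bound) and yields the slightly stronger conclusion $1-O(\delta^2)$; the paper settles for the cruder observation that each competitor deviates in at least one edge.
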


\begin{PRP}[MIS version]\label{prp:aux-mis}
Suppose that $\BM\theta$ is normalized (i.e., $\sum_i\exp(\theta_i)=1$) and uniformly bounded w.r.t.\ a solution $f\in\CAL F$ (i.e., $\sum_if_i\exp(\theta_i)/\exp(\sum_if_i\theta_i)\le L$ for a constant $L>0$).
Let $0<\delta\ll1$ be a sufficiently small number. If $q_{\BM\theta}^\MIS(f)\ge1-\delta$, then we also have $p_{\BM\theta}(f)\ge1-O(\delta)$.
\end{PRP}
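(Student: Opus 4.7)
The plan for both propositions is to upper-bound the partition function $Z := \sum_{g\in\CAL F}\exp(\sum_i g_i\theta_i)$ by $(1+O(\delta))\exp(\sum_i f_i\theta_i)$, since this immediately gives $p_{\BM\theta}(f) = \exp(\sum_i f_i\theta_i)/Z \ge 1-O(\delta)$. The work is to convert the chain-rule concentration $q_{\BM\theta}(f)\ge 1-\delta$ into sharp coordinate-wise bounds on $\BM\theta$ and then sum the contributions of the alternative feasible solutions.

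For the TSP version, I would first apply pigeonhole to $q_{\BM\theta}^\TSP(f)=\tfrac1n\sum_a q_\TSP(\pi_f\mid\pi_f(0)=f(a))\ge 1-\delta$ to get $q_\TSP(\pi_f\mid\pi_f(0)=f(a))\ge 1-n\delta$ for every starting node. Since this conditional is a product of softmax factors each in $[0,1]$, every individual factor is also $\ge 1-n\delta$, which inverts to the local bound $\exp(\theta_{u,v})\le c_\delta\exp(\theta_{u,\mathrm{next}_f(u)})$ with $c_\delta := n\delta/(1-n\delta) = O(\delta)$ for every directed non-tour edge $(u,v)$ (take starting node $s=u$ so that $u$ is the first source and $v$ appears among the alternatives at step $1$). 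For any alternative Hamiltonian cycle $g\ne f$, let $m(g)$ denote the number of its edges outside $f$; a short check shows $m(g)\ge 2$, as two distinct directed Hamiltonian cycles cannot agree on the out-neighbour of all but one vertex (the in-degrees would fail). Bounding the $m(g)$ non-tour edges by the local inequality and using that the sources of edges in $g$ enumerate every vertex gives
\[\exp(\textstyle\sum_i g_i\theta_i)\;\le\;c_\delta^{m(g)}\prod_{u\in V}\exp(\theta_{u,\mathrm{next}_f(u)})\;=\;c_\delta^{m(g)}\exp(\textstyle\sum_i f_i\theta_i).\]
Summing over at most $n!$ alternative cycles yields $Z\le(1+n!\,O(\delta^2))\exp(\sum_i f_i\theta_i)$, which is $(1+O(\delta))\exp(\sum_i f_i\theta_i)$ once $\delta\ll 1$ with $n$ fixed.

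For the MIS version, the normalisation $\sum_v\exp(\theta_v)=1$ makes the first-step softmax $P(a_1=v)=\exp(\theta_v)$, and since $q_{\BM\theta}^\MIS(f)\le P(a_1\in f)$, I obtain $\sum_{v\in f}\exp(\theta_v)\ge 1-\delta$, so $\exp(\theta_v)\le\delta$ for every $v\notin f$. Combining this with the uniform-boundedness hypothesis also forces $W_f:=\exp(\sum_{v\in f}\theta_v)\ge(1-\delta)/L$, bounded away from zero. Maximality of every $g\in\CAL F$ prevents $g\subseteq f$ when $g\ne f$ (otherwise $g\subsetneq f$ contradicts maximality of $g$), so $g\setminus f\ne\emptyset$. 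Relaxing the independent-set and maximality constraints in favour of arbitrary $S_1\subseteq f$ and $\emptyset\ne S_2\subseteq V\setminus f$ gives
\[\sum_{g\ne f}\exp(\textstyle\sum_i g_i\theta_i)\;\le\;\prod_{v\in f}\bigl(1+\exp(\theta_v)\bigr)\cdot\Bigl(\prod_{v\notin f}\bigl(1+\exp(\theta_v)\bigr)-1\Bigr).\]
Using $1+x\le e^x$ bounds the second factor by $e^\delta-1 = O(\delta)$ and the first by $\exp(\sum_{v\in f}\exp(\theta_v))\le e^{LW_f}\le e^L$. Hence the sum is $O(\delta)$, and $p_{\BM\theta}(f)=W_f/Z\ge W_f/(W_f+O(\delta))\ge 1-O(\delta)$ once $W_f$ is bounded below.

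The delicate step for TSP is turning per-ordering softmax bounds into a comparison with an entirely different Hamiltonian cycle; the crucial move is the identity $\prod_{u\in V}\exp(\theta_{u,\mathrm{next}_f(u)})=\exp(\sum_i f_i\theta_i)$, which reassembles the local bounds globally by exploiting that each vertex is the source of exactly one edge in any Hamiltonian cycle. For MIS the naive per-solution bound would blow up with the (potentially exponential) number of maximal independent sets, and the proposition's two hypotheses are precisely what rescue the argument—normalisation controls the tail $\sum_{v\notin f}\exp(\theta_v)$, while uniform boundedness simultaneously caps the head factor $\prod_{v\in f}(1+\exp(\theta_v))$ and keeps $W_f$ away from zero, both of which are needed for the ratio $W_f/(W_f+O(\delta))$ to stay near $1$.
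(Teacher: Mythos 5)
Your MIS argument is correct, and it shares the paper's opening moves — you both use the first-step marginal $q_\MIS(a_1\notin f)\le 1-q^\MIS_{\BM\theta}(f)\le\delta$ together with normalization to control $\exp(\theta_v)$ for $v\notin f$, and both invoke the no-proper-subset assumption to guarantee $g\setminus f\ne\varnothing$. Where you genuinely diverge is the final aggregation over competing solutions. The paper bounds each ratio $\exp(\sum_i g_i\theta_i)/\exp(\sum_i f_i\theta_i)$ individually by $\frac{\delta}{1-\delta}L$ (using $\exp(\theta_i)\le1$ to discard all but one factor of the numerator and to extend the denominator product from $f\setminus g$ to $f$, then applying the uniform-boundedness hypothesis to $\sum_{i\in f}\exp(\theta_i)/\prod_{i\in f}\exp(\theta_i)$) and then sums over all of $\CAL F\setminus\{f\}$, so its hidden constant scales with $|\CAL F|$, which may be exponential in $n$. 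You instead bound the entire tail of the partition function at once by relaxing to arbitrary pairs $S_1\subseteq f$, $\varnothing\ne S_2\subseteq V\setminus f$ and factorizing the resulting subset sum into $\prod_{v\in f}(1+\exp(\theta_v))\cdot\bigl(\prod_{v\notin f}(1+\exp(\theta_v))-1\bigr)\le e^{L}(e^{\delta}-1)$, with uniform boundedness deployed to cap the head factor and to keep $W_f=\exp(\sum_i f_i\theta_i)$ bounded below by $(1-\delta)/L$. This yields the same $1-O(\delta)$ conclusion with a constant independent of the number of feasible solutions, which is a sharper and arguably cleaner bookkeeping of the same underlying estimates; the paper's route is more direct but pays for it in the size of the implied constant.
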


\begin{REM}
Propositions~\ref{prp:aux-tsp} \& \ref{prp:aux-mis} imply that if $q_{\BM\theta}$ converges to $f$ ($\delta\to0_+$), then $p_{\BM\theta}$ also converges to $f$.
\end{REM}

\begin{proof}[Proof for TSP]
Using the bound of $q_{\BM\theta}^\TSP(f)$, we have for any node $j$:
\AL{q_\TSP(\pi_f\mid\pi_f(0)=j)&=nq_{\BM\theta}^\TSP(f)-\sum_{i\ne j}q_\TSP(\pi_f\mid\pi_f(0)=i)\\
&\ge nq_{\BM\theta}^\TSP(f)-(n-1)\\
&\ge n(1-\delta)-(n-1)=1-O(\delta).}
Thus, for any edge $(i,j)$ in the tour $\pi_f$ and any edge $(i,k)\ne(i,j)$,
\AL{\theta_{i,j}-\theta_{i,k}&=\log\frac{\exp(\theta_{i,j})}{\exp(\theta_{i,k})}\\&\ge\log\frac{q_{\BM\theta}(\pi_f(1)=j\mid\pi_f(0)=i)}{1-q_{\BM\theta}(\pi_f(1)=j\mid\pi_f(0)=i)}\\
&\ge\log\frac{q_{\BM\theta}(\pi_f\mid\pi_f(0)=i)}{1-q_{\BM\theta}(\pi_f\mid\pi_f(0)=i)}\\
&\ge\log\frac{1-O(\delta)}{O(\delta)}.}
Note that for any edge $(i,j)$ in the tour $f$ (denoted by $(i,j)\in\pi_f$) and any solution $g\in\CAL F\setminus\{f\}$, there exist a unique $k_i^{g}$ such that edge $(i,k_i^{g})$ is in the tour $\pi_{g}$, and $(i,k_i^{g})\ne(i,j)$ for at least one edge $(i,j)\in\pi_f$. Then,
\AL{p_{\BM\theta}(f)&=\frac{1}{1+\sum_{g\in\CAL F\setminus\{f\}}\exp\big({-\sum_{(i,j)\in f}(\theta_{i,j}-\theta_{i,k_i^g})}\big)}\\
&=\frac{1}{1+\sum_{g\in\CAL F\setminus\{f\}}\exp\big({-\sum_{(i,j)\in f\setminus g}(\theta_{i,j}-\theta_{i,k_i^g})}\big)}\\
&\ge\frac{1}{1+\sum_{g\in\CAL F\setminus\{f\}}\exp\big({-\sum_{(i,j)\in f\setminus g}\log\frac{1-O(\delta)}{O(\delta)}}\big)}\\
&=1-O(\delta).}
\end{proof}

\begin{proof}[Proof for MIS]
Let $|g|$ denote the size of a solution $g\in\CAL F$, i.e., $|g|=\sum_ig_i$. With a little abuse of notation, let $g\in\CAL F$ also denote the corresponding independent set. Note that
\AL{&\frac{\max_{i\notin f}\exp(\theta_i)}{\max_{i\notin f}\exp(\theta_i)+\sum_{i\in f}\exp(\theta_i)}\\
\le{}&\frac{\sum_{i\notin f}\exp(\theta_i)}{\sum_{i\notin f}\exp(\theta_i)+\sum_{i\in f}\exp(\theta_i)}\\
={}&\frac{\sum_{i\notin f}\exp(\theta_i)}{\sum_i\exp(\theta_i)}=\sum_{i\notin f}q_\MIS(a_1=i)\\
={}&q_\MIS(a_1\notin f)\le1-q_{\BM\theta}^\MIS(f)\le\delta.}
This implies
\AL{\max_{i\notin f}\exp(\theta_i)\le\frac{\delta}{1-\delta}\sum_{i\in f}\exp(\theta_i).}
Recall that we have assumed in Section~3.2.2 that each $f'\in\mathcal F$ is not a proper subset of any other $f''\in\mathcal F$. Thus for any $f,g\in\mathcal F$, we have $f\setminus g\ne\varnothing$, and $g\setminus f\ne\varnothing$. Note also that $\exp(\theta_i)\le\sum_j\exp(\theta_j)=1$ for all nodes $i$. Hence,
\AL{p_{\BM\theta}(f)&=\bigg(1+\sum_{g\in\mathcal F\setminus\{f\}}\frac{\exp(\sum_ig_i\theta_i)}{\exp(\sum_if_i\theta_i)}\bigg)^{-1}\\
&=\bigg(1+\sum_{g\in\mathcal F\setminus\{f\}}\frac{\prod_{i\in g\setminus f}\exp(\theta_i)}{\prod_{i\in f\setminus g}\exp(\theta_i)}\bigg)^{-1}\\
&\ge\bigg(1+\sum_{g\in\mathcal F\setminus\{f\}}\frac{\max_{i\in g\setminus f}\exp(\theta_i)}{\prod_{i\in f\setminus g}\exp(\theta_i)}\bigg)^{-1}\\
&\ge\bigg(1+\sum_{g\in\mathcal F\setminus\{f\}}\frac{\max_{i\notin f}\exp(\theta_i)}{\prod_{i\in f}\exp(\theta_i)}\bigg)^{-1}\\
&\ge\bigg(1+\sum_{g\in\mathcal F\setminus\{f\}}\frac{\frac{\delta}{1-\delta}\sum_{i\in f}\exp(\theta_i)}{\prod_{i\in f}\exp(\theta_i)}\bigg)^{-1}\\
&\ge\bigg(1+\sum_{g\in\mathcal F\setminus\{f\}}\frac{\delta}{1-\delta}\cdot L\bigg)^{\!-1}\\
&=1-O(\delta).
}
\end{proof}

\subsection{First-Order Approximation of Meta-Gradient}

The following proposition gives a first-order approximation formula of the meta-gradient.

\begin{PRP}
Let $F_{\boldsymbol\varPhi}(\kappa_s,A_s)$ be a GNN $F$ with parameter $\BM\varPhi$ and input $(\kappa_s,A_s)$, $\CAL L(\BM\varPhi\mid\{s\})$ be a loss function, and $\alpha>0$ be a learning rate. Suppose $\boldsymbol{\varPhi}_s^{(0)}=\boldsymbol{\varPhi}$, and $\boldsymbol{\varPhi}_s^{(t)}=\boldsymbol{\varPhi}_s^{(t-1)} - \alpha\nabla_{\boldsymbol{\varPhi}_s^{(t-1)}} \mathcal{L}(\boldsymbol{\varPhi}_s^{(t-1)} \mid \{s\})$ for $1\le t \le T$, and $\BM\theta_s^{(T)}=F_{\boldsymbol\varPhi_s^{(T)}}(\kappa_s,A_s)$. Then,
\[\nabla_{\boldsymbol{\varPhi}}{\boldsymbol{\theta}^{(T)}_s}=\nabla_{\BM\varPhi_s^{(T)}}F_{\boldsymbol\varPhi_s^{(T)}}(\kappa_s,A_s)+O(\alpha).\]
\end{PRP}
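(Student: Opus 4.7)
The plan is to unroll the $T$ inner updates and show that the Jacobian of $\BM\varPhi_s^{(T)}$ with respect to the initialization $\BM\varPhi$ equals the identity up to an $O(\alpha)$ perturbation, and then to push this through a single application of the chain rule. Throughout, I work under the standard smoothness assumption that $\CAL L(\,\cdot\mid\{s\})$ is $C^2$ with Hessian bounded by some constant $B$ in a neighborhood of $\BM\varPhi$; this is the hidden regularity hypothesis implicit in any $O(\alpha)$ bound of MAML type.

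First, I would apply the chain rule to $\BM\theta_s^{(T)}=F_{\BM\varPhi_s^{(T)}}(\BM\kappa_s,\BM A_s)$, treating $\BM\varPhi_s^{(T)}$ as an intermediate variable:
\[
\nabla_{\BM\varPhi}\BM\theta_s^{(T)}=\nabla_{\BM\varPhi}\BM\varPhi_s^{(T)}\cdot\nabla_{\BM\varPhi_s^{(T)}}F_{\BM\varPhi_s^{(T)}}(\BM\kappa_s,\BM A_s).
\]
This reduces the claim to proving $\nabla_{\BM\varPhi}\BM\varPhi_s^{(T)}=I+O(\alpha)$; once that is established, multiplying by the bounded factor $\nabla_{\BM\varPhi_s^{(T)}}F_{\BM\varPhi_s^{(T)}}(\BM\kappa_s,\BM A_s)$ (which is bounded since $F$ is differentiable in its parameters) yields the stated approximation.

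Next, I would establish $\nabla_{\BM\varPhi}\BM\varPhi_s^{(t)}=I+O(\alpha)$ by induction on $t$. The base case $t=0$ is immediate from $\BM\varPhi_s^{(0)}=\BM\varPhi$. For the inductive step, differentiating the update rule with respect to $\BM\varPhi$ and applying the chain rule to the gradient-of-$\CAL L$ term gives
\[
\nabla_{\BM\varPhi}\BM\varPhi_s^{(t)}=\nabla_{\BM\varPhi}\BM\varPhi_s^{(t-1)}\cdot\bigl(I-\alpha\nabla^2_{\BM\varPhi_s^{(t-1)}}\CAL L(\BM\varPhi_s^{(t-1)}\mid\{s\})\bigr).
\]
By the bounded-Hessian assumption, the right factor is $I+O(\alpha)$; combined with the inductive hypothesis for step $t-1$, this yields $\nabla_{\BM\varPhi}\BM\varPhi_s^{(t)}=I+O(\alpha)$. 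Since $T$ is fixed, the error accumulated over $T$ iterations remains $O(\alpha)$.

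The main obstacle I foresee is precisely the fact that the proposition makes no explicit smoothness hypothesis, so a rigorous write-up must either add such a hypothesis to the statement or hide it inside the $O(\alpha)$ notation; without any bound on $\nabla^2\CAL L$, even the single-step linearization $\alpha\nabla^2\CAL L=O(\alpha)$ fails. Once this technical point is granted, the rest is bookkeeping: substituting the Jacobian estimate back into the chain-rule identity gives
\[
\nabla_{\BM\varPhi}\BM\theta_s^{(T)}=\bigl(I+O(\alpha)\bigr)\nabla_{\BM\varPhi_s^{(T)}}F_{\BM\varPhi_s^{(T)}}(\BM\kappa_s,\BM A_s)=\nabla_{\BM\varPhi_s^{(T)}}F_{\BM\varPhi_s^{(T)}}(\BM\kappa_s,\BM A_s)+O(\alpha),
\]
which mirrors the first-order MAML approximation argument of \citet{finn2017model}.
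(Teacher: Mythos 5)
Your proof is correct and follows essentially the same route as the paper's: both unroll the inner loop so that $\nabla_{\BM\varPhi}\BM\varPhi_s^{(T)}$ becomes a product of factors $\BM I-\alpha\nabla^2\CAL L$, conclude this equals $\BM I+O(\alpha)$, and then push the result through the chain rule with $\nabla_{\BM\varPhi_s^{(T)}}F$. The only differences are presentational: you organize the product as an induction and you make explicit the bounded-Hessian assumption that the paper leaves implicit in its $O(\alpha)$ notation.
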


\begin{proof}The proof resembles \cite{reptile}. By chain rule,
\begin{align}
\nabla_{\BM\varPhi_s^{(0)}}\BM\varPhi_s^{(T)}&=\prod_{t=1}^T\nabla_{\BM\varPhi_s^{(t-1)}}\BM\varPhi_s^{(t)}\\
&=\prod_{t=1}^T\nabla_{\BM\varPhi_s^{(t-1)}}(\BM\varPhi_s^{(t-1)}-\alpha\nabla_{\boldsymbol{\varPhi}_s^{(t-1)}}\mathcal{L}(\boldsymbol{\varPhi}_s^{(t-1)}\mid\{s\}))\\
&=\prod_{t=1}^T(\BM I-\alpha\nabla^2_{\boldsymbol{\varPhi}_s^{(t-1)}}\mathcal{L}(\boldsymbol{\varPhi}_s^{(t-1)}\mid\{s\}))\\
&=\BM I+\sum_{k=1}^T(-\alpha)^k\sum_{1\le t_1<\dots<t_k\le T}\prod_{i=1}^k\nabla^2_{\boldsymbol{\varPhi}_s^{(t_i-1)}}\mathcal{L}(\boldsymbol{\varPhi}_s^{(t_i-1)}\mid\{s\})\\
&=\BM I+O(\alpha).
\end{align}
Hence,
\begin{align}
\nabla_{\boldsymbol{\varPhi}}{\boldsymbol{\theta}^{(T)}_s}&=\nabla_{\BM\varPhi_s^{(0)}}\BM\varPhi_s^{(T)}\nabla_{\BM\varPhi_s^{(T)}}F_{\boldsymbol\varPhi_s^{(T)}}(\kappa_s,A_s)\\
&=(\BM I+O(\alpha))\nabla_{\BM\varPhi_s^{(T)}}F_{\boldsymbol\varPhi_s^{(T)}}(\kappa_s,A_s)\\
&=\nabla_{\BM\varPhi_s^{(T)}}F_{\boldsymbol\varPhi_s^{(T)}}(\kappa_s,A_s)+O(\alpha).
\end{align}
\end{proof}

\section{Additional Experiments for TSP}

\subsection{Performance on TSP-100}
We trained \method{} on TSP-100 and evaluate it on TSP-100 with $T=10$ and $0$ (i.e., with and without meta-learning). Since MCTS is the best per-instance search scheme for \method{} (see Table~\ref{tab:exp-500-1k-10k}), we also use MCTS here. When using AS, we fine-tune \method{} on each instance for 100 steps. We compare \method{} with learning-based methods listed in Section~\ref{sec:main}. Results of baselines are taken from \citet{fu2020generalize}. The results are presented in Table~\ref{tab:exp-100}. 

As is shown in the table, \method{} outperforms all learning-based methods, and its results are very close to optimal lengths given by exact solvers. The results suggest that \method{} achieves the best in-distribution performance among learning-based methods. Notably, with meta-learning ($T=10$), even when \method{} does not fine-tune (i.e., no active search) for each problem instance in evaluation, it still outperforms all other learning-based methods. This again demonstrates the efficacy of meta-learning to \method{}.

\begin{table}[t]\scriptsize
\caption{Results on TSP-100. * indicates the baseline for computing the performance drop.}
\label{tab:exp-100}
\begin{center}
\begin{tabular}{llcc}
\toprule
Method&Type&Length $\downarrow$&Drop $\downarrow$\\
\midrule
Concorde&OR (exact)&7.7609*&---\\
Gurobi  &OR (exact)&7.7609*&---\\
LKH-3   &OR&7.7611&0.0026\%\\
\midrule
EAN&RL+S&8.8372&13.8679\%\\
EAN&RL+S+2-OPT&8.2449&6.2365\%\\
AM&RL+S&7.9735&2.7391\%\\
AM&RL+G&8.1008&4.3791\%\\
AM&RL+BS&7.9536&2.4829\%\\
GCN&SL+G&8.4128&8.3995\%\\
GCN&SL+BS&7.8763&1.4828\%\\
Att-GCN&SL+MCTS&7.7638&0.0370\%\\
\midrule
\method{} ($T=0$)&RL+MCTS    &7.7647    &0.0490\%\\
\method{} ($T=0$)&RL+AS+MCTS &7.7618    &0.0116\%\\
\method{} ($T=10$)&RL+MCTS   &7.7620    &0.0142\%\\
\method{} ($T=10$)&RL+AS+MCTS&\textbf{7.7617}	&\textbf{0.0103\%}\\
\bottomrule
\end{tabular}
\end{center}
\end{table}

\begin{table}[t]\scriptsize
\caption{Results of \method{} (RL+S). ``Trained on TSP-100'' indicates extrapolation performance.}
\label{tab:exp-extrapolation}
\begin{center}
\begin{tabular}{l|cc|cc|cc}
\toprule
\multirow{2}*{Setting}
&\multicolumn{2}{c|}{TSP-500}&\multicolumn{2}{c|}{TSP-1000}&\multicolumn{2}{c}{TSP-10000}\\
&Length $\downarrow$&Drop $\downarrow$&Length $\downarrow$&Drop $\downarrow$&Length $\downarrow$&Drop $\downarrow$\\
\midrule
Trained on TSP-$n$ & 18.84 & 13.84\% & 26.36 & 14.01\% & 85.75 &19.48\%\\
Trained on TSP-100 & 19.21 & 16.07\% & 27.21 & 17.69\% & 86.24 &20.16\%\\
\bottomrule
\end{tabular}
\end{center}
\end{table}

\subsection{Extrapolation Performance}
We evaluate the exptrapolation performance of \method{} (i.e., trained on smaller graphs and tested on larger graphs). We train the model on TSP-100 and test it on TSP-500/1000/10000. For testing, we use RL+S ($\tau=0.01$) without active search. The results are reported in Table~\ref{tab:exp-extrapolation} in comparison with corresponding results trained on larger graphs (TSP-$n$).

From the table we can observe that the performance of \method{} does not drop much, which demonstrates the nice extrapolation performance of \method{}. One of our hypotheses is that graph sparsification in our neural network (see Appendix~\ref{apd:arch-tsp}) avoids the explosion of activation values in the graph neural network. Another hypothesis is that meta learning tends to not generate too extreme values in  (see point 9 of our previous response) and hence improve the generalization capability.

\begin{table}[t]
\scriptsize
\caption{Comparison of training settings for TSP-500/1000/10000.}
\label{tab:train-cost}
\begin{center}
\begin{tabular}{lccc}
\toprule
\textbf{Setting} & \textbf{AM} & \textbf{POMO} & \textbf{\method{}}\\
\midrule
Training problem scale&TSP-100&TSP-100&TSP-500\,/\,1000\,/\,10000\\
Training descent steps&250,000&312,600&120\,/\,120\,/\,50\\
Per-step training instances&512&64&3\\
Total training instances&128,000,000&20,000,000&360\,/\,360\,/\,150\\
Per-step training time&0.66\,s&0.28\,s&45\,s\,/\,51\,s\,/\,12\,m\\
Total training time&2\,d&1\,d&1.5\,h\,/\,1.7\,h\,/\,10\,h\\
Training GPUs&2&1&1\\
\bottomrule
\end{tabular}
\end{center}
\end{table}

\begin{figure}[t]
\begin{center}
\begin{subfigure}[b]{0.45\textwidth}
\centering
\includegraphics[width=\textwidth]{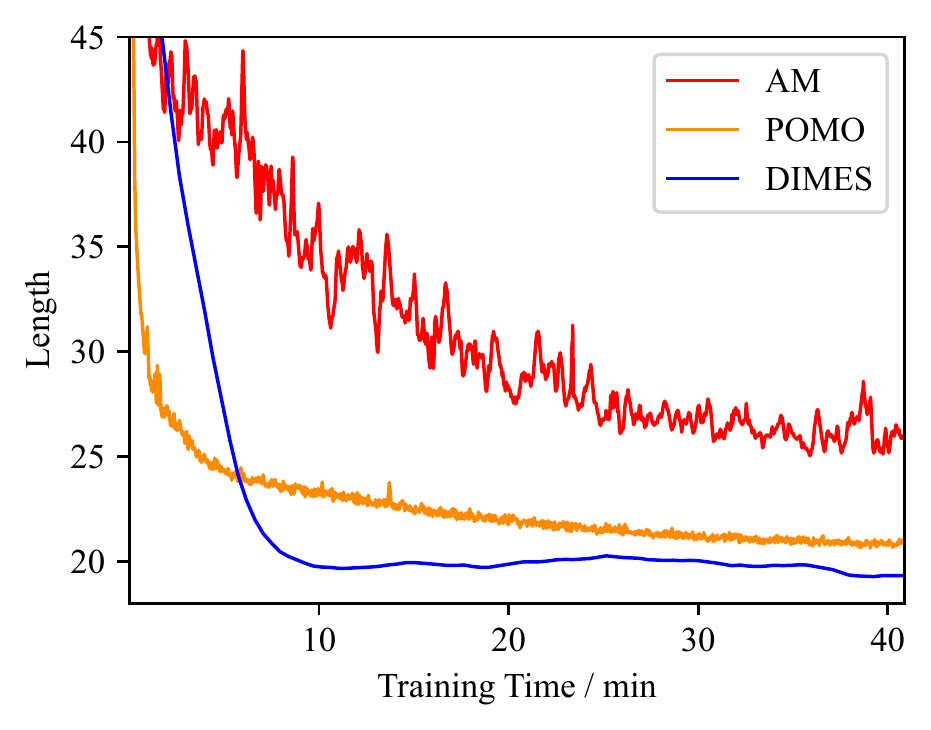}
\caption{Performance vs training time.}
\label{fig:tr-dyn-time}
\end{subfigure}
\hfill
\begin{subfigure}[b]{0.45\textwidth}
\centering
\includegraphics[width=\textwidth]{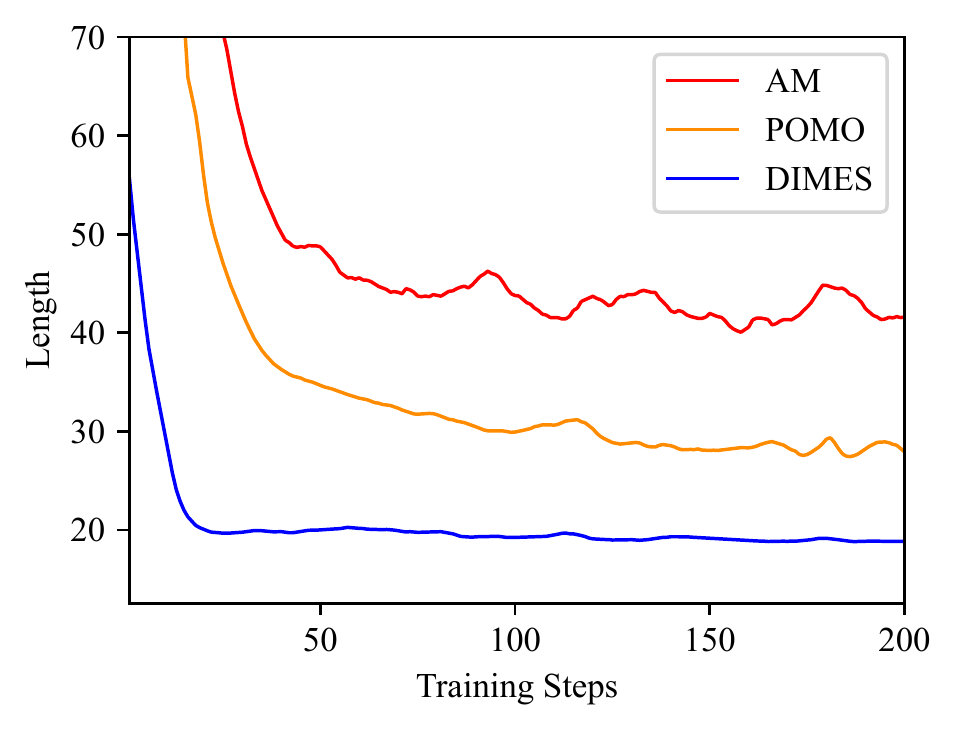}
\caption{Performance vs training steps.}
\label{fig:tr-dyn-steps}
\end{subfigure}
\end{center}
\caption{Evaluation performance vs training cost.}
\label{fig:tr-dyn}
\end{figure}

\subsection{Stability of Training}
We compare the training settings of AM \cite{19iclr-am}, POMO \cite{kwon2020pomo}, and \method{} in Table~\ref{tab:train-cost}. The training costs of AM and POMO are obtained from their papers\footnote{For AM/POMO, per-step training time is estimated by total training time divided by total training steps.} A training step means a gradient descent step of the GNN. That is, for AM/POMO, a training step means a gradient descent step over a batch; for DIMES, a training step means a meta-gradient descent step.

The table shows that \method{} is much more sample-efficient than AM/POMO. Notably, \method{} achieves stable training using only 3 instances per meta-gradient descent step. Hence, its total training time is accordingly much shorter, even though its per-step time is longer. Moreover, the stability of training enables us to use a larger learning rate, which also accelerates training.

To further illustrate the fast stable training of \method{}, we compare the dynamics of training among AM, POMO, and \method{} in Figure~\ref{fig:tr-dyn}. We closely follow the training settings of their papers, i.e., we train AM/POMO on TSP-100 and \method{} on TSP-500. For AM/POMO, we train their models on our hardware by re-running their public source code. The performance is evaluated using TSP-500 test instances. For DIMES, we use RL+S in evaluation.

From Figure~\ref{fig:tr-dyn-time}, we can observe that \method{} stably converges to a better performance within fewer time, while the dynamics of training AM/POMO are slower and less stable. From Figure~\ref{fig:tr-dyn-steps}, we can observe that \method{} converges at much fewer training steps. The results again demonstrate that the training of \method{} is fast and stable.

\end{document}